\newtheorem{theorem}{Theorem}[section]
\newtheorem{corollary}{Corollary}[theorem]
\newtheorem{remark}[theorem]{Remark}
\author{%
  Benoit Dherin$^*$ \\
  Google Research\\
  \texttt{dherin@google.com} \\
  \And
  Michael Munn$^*$ \\
  Google Research \\
  \texttt{munn@google.com} \\
  \AND
}
\title{On residual network depth}
\begin{document}

\maketitle
\def\thefootnote{*}\footnotetext{Equal contribution}

\begin{abstract}
Deep residual architectures, such as ResNet and the Transformer, have enabled models of unprecedented depth, yet a formal understanding of why depth is so effective remains an open question. A popular intuition, following Veit et al. (2016), is that these residual networks behave like ensembles of many shallower models. Our key finding is an explicit analytical formula that verifies this ensemble perspective, proving that increasing network depth is mathematically equivalent to expanding the size of this implicit ensemble. Furthermore, our expansion reveals a hierarchical ensemble structure in which the combinatorial growth of computation paths leads to an explosion in the output signal, explaining the historical necessity of normalization layers in training deep models. This insight offers a first-principles explanation for the historical dependence on normalization layers and sheds new light on a family of successful normalization-free techniques like SkipInit and Fixup. However, while these previous approaches infer scaling factors through optimizer analysis or a heuristic analogy to
Batch Normalization, our work offers the first explanation derived directly from
the network’s inherent functional structure. Specifically, our Residual Expansion
Theorem reveals that scaling each residual module provides a principled solution to taming the combinatorial explosion inherent to these architectures. We further show that this scaling acts as a capacity controls that also implicitly regularizes the model’s complexity.

\end{abstract}

\section{Introduction}

The introduction of deep residual networks \citep{he2016deep} marked a pivotal moment in deep learning, enabling the stable training of architectures with unprecedented depth. A central intuition for their success, proposed by \cite{veit2016residual}, is the ``unraveled view,'' which suggests that a ResNet functions not as a single, monolithic entity, but as an implicit ensemble of many shallower networks. This perspective, however, has remained largely conceptual, supported by empirical lesion studies rather than a precise mathematical framework.

In this work, we move beyond analogy to provide a rigorous analytical foundation for this ensemble interpretation. Our main contribution is the Residual Expansion Theorem, which derives an explicit formula for the exact functional form of the implicit ResNet ensemble. This expansion reveals a hierarchical structure where increasing network depth directly corresponds to increasing the number of models in the ensemble, which are combined in a combinatorially growing number of ways. A direct consequence of this finding is a new, first-principles explanation for the instability of very deep networks: a ``combinatorial explosion'' in the number of functional paths leads to an explosion in the output magnitude, a problem historically managed by normalization layers.

This theoretical insight also sheds new light on a family of successful techniques for training very deep networks without normalization. Methods like Fixup \citep{zhang2019fixup} and SkipInit \citep{de2020batch} have empirically demonstrated that scaling down residual branches is crucial for stability. Their scaling factors, however, were derived from analyzing optimizer dynamics or by analogy to Batch Normalization. Our Residual Expansion Theorem provides an underlying theoretical justification, based on the network's functional structure, explaining that these methods are effective precisely because they implicitly counteract the combinatorial explosion of the paths that we formally identify.

Our theorem not only explains why scaling is necessary but also establishes a principled framework when training residual architectures. We show that scaling each residual module by the inverse of the total network depth (i.e., scaling by $1/n$ for a network of depth $n$) is an immediate mathematical consequence of our ensemble characterization, specifically designed to tame this combinatorial growth. However, while this $1/n$ scaling architecturally ensures stable, normalization-free training, experimentally we notice that scaling factors beyond $1/n$ and closer to $1/\sqrt{n}$, consistently lead to higher test accuracy without reintroducing the need for normalization. This suggests treating the scaling of the residual branch as an hyper-parameter $\lambda$ to be tuned around the value $1/\sqrt{n}$. In the last part of our paper, we dive into both experimental and theoretical analysis, demonstrating that $\lambda$ functions as a form of \emph{regularized} capacity control that simultaneously enhances model capacity and promotes simpler solutions.

In summary, this paper makes the following contributions:

\begin{itemize}

\item We introduce the Residual Expansion Theorem, formally establishing the ensemble nature of ResNets and providing an explicit formula that links network depth to ensemble size; see Theorem \ref{theorem:expansion}.
    
\item We pinpoint the combinatorial explosion of functional paths as the root cause of instability in deep, unnormalized residual networks, offering a unifying theoretical explanation for the efficacy of existing scaling-based methods; see Section \ref{section:combinatorial_explosion}.

\item We derive a principled scaling method for residual branches, parameterized by $\lambda$, which directly counteracts the combinatorial explosion identified by our theorem, thereby enabling stable, normalization-free training of deep networks; see Section \ref{section:imapct_on_trainable_depth}.

\item We show that our proposed scaling acts as a novel form of capacity control and implicitly regularizes the model's geometric complexity; see Section \ref{section:impact_on_complexity} and Appendix \ref{appendix:geometric_complexity}.

\end{itemize}

In Appendix \ref{appendix:geometric_counterpart} we provide a geometric analysis showing that the loss surfaces of shallower residual networks are naturally embedded within those of deeper ones, providing a different but complementary angle on the impact of residual depth on the optimization of residual models.

\section{Related work}

\paragraph{Residual Architectures and Ensembles.} The remarkable success of deep residual networks (ResNets), first introduced by \cite{he2016deep}, has prompted a significant body of research aimed at understanding the mechanisms that enable their training at extreme depths. The core architectural innovation (the identity shortcut or skip connection) was designed to address the degradation problem observed in very deep plain networks. A follow-up analysis by \cite{he2016identity} further refined the architecture, arguing that unimpeded information propagation through ``identity mappings'' in both the forward and backward passes is crucial for stable training.

Building on this foundation, a pivotal  contribution in this area is the work of \cite{veit2016residual}, who proposed an ``unraveled view'' of ResNets, interpreting them not as a single, monolithic deep model, but as an implicit ensemble of a combinatorial number of shallower networks. Each possible path that data can take through the network by either passing through a residual block or bypassing it via the identity connection constitutes a distinct member of this ensemble. The primary evidence for this interpretation came from lesion studies, which demonstrated that removing individual residual blocks at test time led to a graceful degradation in performance, akin to removing models from a conventional ensemble, whereas removing layers from a plain network caused catastrophic failure. This perspective also offered a compelling explanation for how ResNets mitigate the vanishing gradient problem: rather than preserving gradient flow through the entire network depth, gradients during backpropagation are dominated by the collection of relatively short paths, which remain trainable.

While this ensemble analogy has been highly influential, it is not exact, as the constituent paths are not independent but share parameters across layers, a key distinction from traditional ensembling methods. Our work builds directly upon this intuition by moving beyond empirical observation and analogy to provide a precise mathematical characterization. We derive an explicit formula for the exact functional form of this implicit ensemble, revealing a hierarchical structure where models of increasing complexity are combined, thereby providing an analytical foundation for the ensemble nature of residual architectures.

\paragraph{Mathematical Formalisms of the Ensemble View.} Building on the conceptual foundation laid by \cite{veit2016residual}, other lines of research have sought to formalize the ensemble nature of residual networks through different mathematical frameworks. \cite{huang2018learning} framed the ResNet architecture through the lens of boosting theory, proposing a ``telescoping sum boosting'' of weak learners. In their work, the final output of a ResNet is shown to be equivalent to a summation of ``weak module classifiers'' derived from each residual block, providing a rigorous mathematical basis for the ensemble interpretation that is analogous to the first-order term in our expansion. Another theoretical approach \citep{lu2020mean} views ResNets in a continuous limit, treating them as a discretization of an ordinary differential equation (ODE). This analysis also uncovers a combinatorial structure, showing that at order $k$, the function contains $\mathcal O(kL)$ terms, with the contribution of the $k$-th order term decaying as $\mathcal O(1/k!)$. More recently, a parallel line of work by \cite{chen2024jet} introduced ``Jet Expansions,'' a framework that also decomposes a residual network's computation into a sum of its constituent paths . This method uses jets, which generalize Taylor series, to systematically disentangle the contributions of different computational paths to a model's final prediction. While our expansion is used to derive a scaling law for stable training, the Jet Expansion framework is primarily motivated by model interpretability, enabling data-free analysis of model behavior such as extracting n-gram statistics or indexing a model's toxicity. The concurrent development of these expansion-based theories highlights a trend toward understanding network function by analyzing its underlying computational paths.

\paragraph{The Role of Width and Depth.} The architectural design of neural networks involves a fundamental trade-off between depth (the number of layers) and width (the number of neurons per layer). Theoretical work has provided strong evidence for the primacy of depth in terms of expressive power. \cite{raghu2017on} demonstrated, through an analysis of activation patterns and a novel metric called ``trajectory length,'' that the complexity of the function a network can represent grows exponentially with depth but only polynomially with width. This suggests that for a fixed parameter budget, deeper networks are theoretically capable of approximating a far richer class of functions than their shallower counterparts. However, this theoretical advantage is challenged by empirical findings. \cite{zagoruyko2016wide} introduced Wide Residual Networks (WRNs) and showed that a significantly wider but much shallower ResNet could outperform a very deep and thin one, while also being substantially more computationally efficient to train. They argued that extremely deep networks suffer from ``diminishing feature reuse,'' where additional layers provide marginal benefits at a high computational cost. While depth is theoretically potent and a minimum width is necessary for universal approximation \citep{lu2017the}, the practical benefits of extreme depth have been questioned.

Our work offers a new lens through which to view this debate for residual architectures. The derived ensemble expansion reveals that increasing depth $n$ is mathematically analogous to increasing the number of experts in the model mixture. This reframes the discussion from a simple trade-off between depth and width to one between the size of the ensemble and the capacity of its individual members, offering a potential synthesis of these competing perspectives.

\paragraph{Normalization Layers and Depth.} A key practical challenge in training very deep networks is maintaining stable signal propagation and avoiding the explosion of activation magnitudes. Historically, this has been addressed by normalization layers, which have become a standard component in deep learning architectures. The seminal work on Batch Normalization by \cite{ioffe2015batch} proposed normalizing layer inputs over a mini-batch to reduce ``internal covariate shift,'' which dramatically stabilized training and allowed for higher learning rates. Subsequently, Layer Normalization (Ba et al., 2016) was introduced, which normalizes over the features within a single example, making it independent of batch size and particularly effective for recurrent architectures and Transformers. More recently, a line of research has explored the possibility of training deep networks without any normalization layers. Notable successes in this area include Fixup initialization \cite{zhang2019fixup} and SkipInit \cite{de2020batch}, which use careful, static rescaling of weights or branches at initialization to ensure stable dynamics. A different perspective is offered by recent work from \cite{bordelon2025deep}, which uses dynamical mean-field theory to analyze training in the infinite-depth limit of deep linear networks. Their analysis suggests that for the training dynamics to have a well-defined limit, residual branches must be scaled by $1/\sqrt{\text{depth}}$.

\section{The residual expansion theorem}
\label{section:residual_expansion_theorem}

To analyze the functional properties of deep residual networks, we consider a slightly modified but representative architecture. Most modern residual models can be described by the following general form, which separates the network into an encoding block $E_\xi$, a tower of residual blocks $R_\theta$ and a final decoding layer $D_\eta$ as follows:
\begin{equation}\label{equation:residual_network}
f(x) = D_{\eta} \circ R_\theta \circ E_{\xi}(x) 
\end{equation}
where
\begin{itemize}
\item $E_{\xi}: \mathbb{R}^{d_{in}} \rightarrow \mathbb{R}^{d_{e}}$ is an \emph{encoding network} that maps the input into a representational space. 

\item $R_\theta:\mathbb{R}^{d_{e}}\rightarrow \mathbb{R}^{d_{e}}$ is a \emph{residual tower} composed of $n$ blocks that transforms the representation of the encoded input:
\begin{equation}\label{residual_tower}
R_\theta = (1+\lambda F_{n}) \circ \dots \circ (1+\lambda F_{1}),
\end{equation}
with each block having the form $(1+\lambda F_{i})(z) = z + \lambda F_{i}(z)$ where $F_i: \mathbb{R}^{d_e} \rightarrow \mathbb{R}^{d_e}$ is a function representing the residual branch (e.g., a sequence of linear, normalization, and activation layers). The scalar $\lambda$ controls the contribution of each residual branch, which is the slight modification we introduce. 

\item $D_{\eta}: \mathbb{R}^{d_{e}} \rightarrow \mathbb{R}^{d_{out}}$ is a \emph{decoding network} that maps the final representation to the output space, which we assume to be an affine map, $D_{\eta}(z) = W_{\eta}z + b_\eta$.
\end{itemize}

This structure allows us to derive an exact expansion for the network's function, formalizing the intuition that a ResNet behaves as an ensemble of shallower networks.

\begin{theorem}[The Residual Expansion Theorem]
\label{theorem:expansion}
Consider a residual network with $n$ blocks of the form given in Equation \ref{equation:residual_network}. First of all, the residual tower admits the following expansion:
\begin{equation}\label{equation:residual_tower_expansion}
    R_\theta(z) =  
    z
    + \lambda \sum_{i=1}^n F_{i}(z)
    + \lambda^2 \sum_{1\leq i < j \leq n} F_{j}'(z)F_{i}(z) 
    + \mathcal O(\lambda^3)
\end{equation}
Moreover the residual network can be expressed as a infinite sum of increasingly larger ensembles of models as a result: 
\begin{equation}
\label{equation:residual_network_expansion}
f(x) = 
\underbrace{D_\eta\big(E_\xi(x)\big)}_{\text{Order 0: Base Model $M_0$}} 
+ \lambda \underbrace{\sum_{i=1}^{n} W_{\eta} F_i\big( E_{\xi}(x)\big)}_{\text{Order 1: Ensemble $M_1$}} 
+ \lambda^2 \underbrace{\sum_{1 \le i < j \le n} W_{\eta}F_j'(E_{\xi}(x))F_i(E_{\xi}(x))}_{\text{Order 2: Ensemble $M_2$}} + \dots
\end{equation}
Moreover, if we further assume that the encoding network is an affine map $E_\xi(x) = W_\xi x + b_\xi$, then the base model is also an affine map $D_\eta\big(E_\xi(x)\big) = W_0x + b_0$ with $W_0 = W_\eta W_\xi$ and $b_0 = W_\eta b_\xi + b_\eta$.
\end{theorem}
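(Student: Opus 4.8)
The plan is to prove the theorem by establishing the residual tower expansion \eqref{equation:residual_tower_expansion} first, then propagating it through the affine decoder, and finally specializing the base term when the encoder is affine. Throughout, I treat $\lambda$ as a formal expansion parameter and each $F_i$ as sufficiently smooth, so that Taylor expansion of compositions is justified to the orders stated.

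\textbf{Step 1: Expand the residual tower by induction on $n$.} I would write $R^{(n)}_\theta = (1+\lambda F_n)\circ\cdots\circ(1+\lambda F_1)$ and induct on the number of blocks. The base case $n=1$ is immediate: $R^{(1)}_\theta(z) = z + \lambda F_1(z)$, which matches \eqref{equation:residual_tower_expansion} with empty second-order sum. For the inductive step, assume
\[
R^{(n-1)}_\theta(z) = z + \lambda\sum_{i=1}^{n-1} F_i(z) + \lambda^2\!\!\sum_{1\le i<j\le n-1}\!\! F_j'(z)F_i(z) + \mathcal O(\lambda^3),
\]
and apply $(1+\lambda F_n)$ on the outside: $R^{(n)}_\theta(z) = R^{(n-1)}_\theta(z) + \lambda F_n\big(R^{(n-1)}_\theta(z)\big)$. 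The first summand contributes the order-$0$, order-$1$ (for $i\le n-1$), and order-$2$ (for $j\le n-1$) terms directly. For the second summand I Taylor-expand $F_n$ around $z$: since $R^{(n-1)}_\theta(z) = z + \lambda\sum_{i=1}^{n-1}F_i(z) + \mathcal O(\lambda^2)$, we get $\lambda F_n(R^{(n-1)}_\theta(z)) = \lambda F_n(z) + \lambda^2 F_n'(z)\sum_{i=1}^{n-1}F_i(z) + \mathcal O(\lambda^3)$. Collecting terms, the new order-$1$ term $\lambda F_n(z)$ completes the sum $\lambda\sum_{i=1}^n F_i(z)$, and the new order-$2$ contributions $\lambda^2\sum_{i=1}^{n-1}F_n'(z)F_i(z)$ are exactly the terms with $j=n$, completing $\lambda^2\sum_{1\le i<j\le n}F_j'(z)F_i(z)$. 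This closes the induction.

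\textbf{Step 2: Propagate through the decoder.} Since $D_\eta(z) = W_\eta z + b_\eta$ is affine, I apply it to $R_\theta(E_\xi(x))$ and use linearity to distribute $W_\eta$ across the expansion in Step 1 evaluated at $z = E_\xi(x)$: the constant $b_\eta$ and the $W_\eta z = W_\eta E_\xi(x)$ term combine into $D_\eta(E_\xi(x))$, the order-$1$ sum becomes $\lambda\sum_i W_\eta F_i(E_\xi(x))$, and the order-$2$ sum becomes $\lambda^2\sum_{i<j} W_\eta F_j'(E_\xi(x))F_i(E_\xi(x))$, giving \eqref{equation:residual_network_expansion}. (Implicitly I note that the expansion extends to all orders; the $\mathcal O(\lambda^3)$ tail of Step 1, being a genuine function of $z$, is mapped by the affine $D_\eta$ to another $\mathcal O(\lambda^3)$ term, and the ``$+\dots$'' is justified by iterating the induction to higher order.) Finally, for the affine-encoder specialization, I simply substitute $E_\xi(x) = W_\xi x + b_\xi$ into $D_\eta(E_\xi(x)) = W_\eta(W_\xi x + b_\xi) + b_\eta = (W_\eta W_\xi)x + (W_\eta b_\xi + b_\eta)$, yielding $W_0 = W_\eta W_\xi$ and $b_0 = W_\eta b_\xi + b_\eta$.

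\textbf{Main obstacle.} The routine steps — the affine propagation and the base-model identification — are trivial bookkeeping. The one place needing care is the Taylor-expansion argument in Step 1: I must be explicit that $F_n$ is evaluated at an argument that differs from $z$ by an $\mathcal O(\lambda)$ perturbation, so that the first-order Taylor remainder is $\mathcal O(\lambda^2)$ and the second-order remainder is $\mathcal O(\lambda^3)$, uniformly on the relevant domain. This requires assuming each $F_i \in C^2$ (or smoother) with locally bounded derivatives; I would state this regularity hypothesis explicitly. A secondary subtlety is making precise what the ``$+\dots$'' and the all-orders claim mean: one should either assume the $F_i$ are analytic (so the series converges on a neighborhood of $\lambda=0$) or, more cheaply, interpret the expansion as an asymptotic/formal power series in $\lambda$, with the general order-$k$ term obtained by carrying the same induction to order $k$. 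I would adopt the formal-power-series reading since that is all the later combinatorial-explosion argument requires.
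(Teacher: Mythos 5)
Your proposal is correct and follows essentially the same route as the paper's own proof: induction on the number of blocks with a first-order Taylor expansion of $F_n$ around $z$ to collect the order-$2$ cross terms, followed by distributing the affine decoder and substituting the affine encoder. Your added remarks on regularity of the $F_i$ and on reading the ``$+\dots$'' as a formal/asymptotic series are reasonable clarifications of points the paper leaves implicit, but they do not change the argument.
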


\begin{proof}
Let us start by showing the expansion for the residual tower as in Equation \ref{equation:residual_tower_expansion}.
    We proceed by induction. For the base case, $n=1$, this is trivial. Suppose now that this is true for any composition of $n-1$ operators of the form $(1 + \lambda F_i(z))$. By definition, we have that
    \begin{eqnarray*}
     (1 + \lambda F_n)\circ \cdots \circ (1 + \lambda F_1)(z) 
        & = & (1 + \lambda F_n)\Big((1 + \lambda F_{n-1})\cdots (1 + \lambda F_1)(z)\Big) \\
        & = & X + \lambda F_n(X) \label{eq:X}
    \end{eqnarray*}
    with $X = (1 + \lambda F_{n-1})\cdots (1 + \lambda F_1)(z)$. Now by induction hypothesis we have that
    \begin{eqnarray*}
        X = z + \lambda\sum_{i=1}^{n-1} F_{i}(z) + \lambda^2
        \sum_{1\leq i < j \leq n-1} F_{j}'(z)F_{i}(z)
        + \mathcal O(\lambda^3)
    \end{eqnarray*}
    Therefore, taking a Taylor series for the second term of $X + \lambda F_n(X)$, we obtain
    \begin{eqnarray*}
    \lambda F_n(X) = \lambda F_n(z) + \lambda^2 \sum_{i=1}^{n-1} F_{n}'(z) F_{i}(z) +\mathcal O(\lambda^3).
    \end{eqnarray*}
    Summing up  $X$ and  $\lambda F_n(X)$, we obtain that the composition $ (1 + \lambda F_n)\circ \cdots \circ (1 + \lambda F_1)(z)$ has the form
    \begin{equation*}
    z
    + \lambda \sum_{i=1}^n F_{i}(z)
    + \lambda^2 \sum_{1\leq i < j \leq n} F_{j}'(z)F_{i}(z)
    + \mathcal O(\lambda^3)
    \end{equation*}
    which completes the proof for ther residual tower expansion.
The residual network expansion in Equation \ref{equation:residual_network_expansion} follows immediately from assuming that $E_\xi(x) = W_\xi x + b_\xi$ and $D_\eta(z) = W_\eta z + b_\eta$. Namely, we have that
\begin{eqnarray}
f(x) 
& = & D_\eta\Big(R_\theta\big(E_\xi(x)\big)\Big) \\
& = & W_\eta
\Big(
    E_\xi(x) 
    + \lambda \sum_{i=1}^n F_i(E_\xi(x)) \\
&   & \quad
    + \lambda^2 \sum_{1\leq i < j \leq n} F_{j}'(E_\xi(x))F_{i}(E_\xi(x))
    + \mathcal O(\lambda^3)
\Big) + b_\eta\\
& = &\Big(W_\eta E_\xi(x) + b_\eta\Big) 
    + \lambda \sum_{i=1}^n W_\eta F_i(E_\xi(x)) \\
&   & \quad
    + \lambda^2 \sum_{1\leq i < j \leq n} W_\eta F_{j}'(E_\xi(x))F_{i}(E_\xi(x))
    + \mathcal O(\lambda^3)
\\
& = & D_\eta\big(E_\xi(x)\big)    + \lambda \sum_{i=1}^n W_\eta F_i(E_\xi(x)) \\
&   & \quad
    + \lambda^2 \sum_{1\leq i < j \leq n} W_\eta F_{j}'(E_\xi(x))F_{i}(E_\xi(x))
    + \mathcal O(\lambda^3)
\end{eqnarray}
Moreover, if $E_\xi(x) = W_\xi x + b_\xi$ is an affine map we have that the base model is also an affine map:
\begin{equation}
D_\eta\big(E_\xi(x)\big) = W_\eta (W_\xi x + b_\xi) + b_\eta = W_0 x + b_0
\end{equation}
with $W_0 = W_\eta W_\xi $ and $b_0 = W_\eta b_\xi + b_\eta$.
\end{proof}

\subsection{Interpretation and the Combinatorial Explosion}
\label{section:combinatorial_explosion}

The Residual Expansion Theorem provides a precise mathematical foundation for the ``ensemble view'' of residual networks and a first-principles explanation for their instability at extreme depths.

\paragraph{Hierarchical Ensemble.} The expansion reveals a structured hierarchy of models of increasing capacity. For instance, when the encoding network is linear, then the \emph{zero-order term}, $$M_0(x) = W_0x + b_0,$$ is a simple linear model with a factorized parametrization $W_0 = W_\eta W_\xi $ and $b_0 = W_\eta b_\xi + b_\eta$.
This serves as the foundational model, whose loss landscape is embedded within all deeper variants (for a more detailed discussion on the geometric intuition behind this viewpoint, see Appendix \ref{appendix:geometric_counterpart}).
Assuming an average scaling $\lambda = 1/n$ of the residual modules, the \emph{first-order term} 
$$M_1(x) = \frac 1n \sum_{i=1}^{n} W_{\eta} F_i\big( E_{\xi}(x)\big)$$
becomes an ensemble of $n$ models, each passing through one residual block. 
The \emph{second-order term}
$$
M_2(x) = \frac 1{n^2}\sum_{1 \le i < j \le n} W_{\eta}F_j'(E_{\xi}(x))F_i(E_{\xi}(x))
$$
is an ensemble of $\binom{n}{k} \simeq n^2$ more complex models made of two residual blocks. 
Note that in this ensemble the modules $F_j$ and $F_i$ are not simply composed as $F_j( F_i (x))$ but rather the linear approximation $F_j'(E_\xi(x))$ of $F_j$ at $E_\xi(x)$ instead is applied to $F_i(E_\xi(x))$. 
Beyond $M_2$ the higher-order ensembles are significantly harder to describe and interpret as they involve all higher-derivatives of the residual branches. 

\paragraph{Combinatorial Explosion.} The number of terms in the residual ensemble at order $k$ is bounded below by the binomial coefficient $\binom{n}{k}$. For a fixed $k$, this count grows polynomially with the depth $n$ (as $\mathcal{O}(n^k)$). Summing across all orders, the total number of terms grows exponentially as $2^n$. Without any constraints on $\lambda$, this leads to a ``combinatorial explosion''; i.e., as the depth $n$ increases, the output magnitude can grow uncontrollably due to the rapidly increasing number of terms. This provides a fundamental explanation for the historical reliance on normalization layers to stabilize training in very deep residual networks. This insight directly motivates the strategy of scaling the residual branches by a factor $\lambda < 1$ to counteract this combinatorial explosion, which we explore in the next section.

\section{The impact of $\lambda$}

In this section we discuss the effect of the scale parameter $\lambda$ on trainable depth, model capacity, and model complexity. 

\subsection{The impact of $\lambda$ on the trainable depth}
\label{section:imapct_on_trainable_depth}

Our Residual Expansion Theorem \ref{theorem:expansion} shows that the number of terms at order $\lambda^n$ grows combinatorially as the number of residual layers $n$ increases. Specifically, at first order we have only $n$ terms, each of which has similar magnitude. However, already at second order in $\lambda$, the number of terms in the ensemble is of order $\mathcal O(n^2)$, and, more generally, the number of terms of similar magnitude at order $k$ is of order $\mathcal O(n^k)$. The overall combinatorial explosion of terms is exponential, and we can expect a corresponding  explosion in magnitude in the model output as the number of layers increases. This fundamental issue explains why deep residual architectures have historically relied on some form of normalization layers, such as Batchnorm \cite{ioffe2015batch} or  LayerNorm \cite{ba2016layer}, for stable training.

On the other hand, from Theorem \ref{theorem:expansion}, we can predict that scaling the residual branch by a factor $\lambda < 1$ will counteract the combinatorial explosion of the higher-ensemble terms at each order, provided $\lambda$ is chosen sufficiently small. Following this intuition, we should then be able to train deep residual networks without normalization but only by scaling the residual branches. As we verify in Table \ref{table:cifar10_best_test_accuracies}, this is indeed the case. Deeper networks become untrainable without normalization, but remain stably trainable with comparable test accuracy when their residual branches are appropriately scaled.  

Our Residual Expansion Theorem \ref{theorem:expansion} directly yields a specific choice for $\lambda$ that controls this combinatorial growth, ensuring that the magnitude of higher-order ensemble contributions remains independent of network depth. For example, if we set $\lambda = 1/n$, the magnitudes of the first two residual ensembles ($M_1$ and $M_2$) become independent of the number of layers, effectively converting the exploding sums into stable averages across multiple implicit paths. Note that, we then have
\begin{eqnarray}
M_1(x) & = & \frac 1n \sum_{i=1}^{n} W_{\eta} F_i\big( E_{\xi}(x)\big) \\
M_2(x) & = & \frac 1{n^2}\sum_{1 \le i < j \le n} W_{\eta}F_j'(E_{\xi}(x))F_i(E_{\xi}(x)).
\end{eqnarray}
Table \ref{table:cifar10_best_test_accuracies} shows that it becomes difficult to train a standard residual network (i.e., $\lambda = 1$) without normalization as depth grows. However, with a simple scaling like $\lambda = 1/n$, it is possible to effectively train thousands of layers without any use of normalization layers. 

\begin{remark} \label{remark:higher_values_of_lambda}
The scaling $\lambda = 1/n$ is very natural since it makes the number of terms at each $\lambda^k$ independent of the network length $n$, essentially replacing a sum with an average. Interestingly, however, while the $1/n$ scaling factor enables stable, normalization-free training of very deep residual networks, we empirically observe a decrease in test performance compared to normalized architectures. Nevertheless, by setting $\lambda = 1/\sqrt{n}$ instead, we not only maintain stable training but also recover a substantial portion, if not all, of  the performance loss attributable to the absence of normalization layers. Perhaps one reason for this is that that the average scaling $\lambda = 1/n$ may suppress the contribution of the higher-ensemble too much. Namely, if we take into account $k$ in the estimation of the number of terms at order $\lambda^k$, we have that the number of terms grows as $\mathcal O(n^k/k!)$. Using $\lambda = 1/n$ is then produces an output-magnitude growth of order $\mathcal O(1/k!)$, which rapidly becomes small. On the contrary, setting $\lambda = 1/\sqrt{n}$ leaves a growth of order $\mathcal O(n^{k/2}/k!)$ which may counteract the rapid decay of the higher-order ensemble. This finding points toward a need to tune $\lambda$ or to leave it as parameter to be learned in the spirit of SkipInit \citep{de2020batch}. In the next section, we examine the impact of $\lambda$ on the learned model. 
\end{remark}

\begin{table}[h!]
\caption{Optimal test accuracies (with error bars) on CIFAR-10 for a deep residual model of $n$ convolutional layers. We evaluate three scaling factors for the residual branches: $\lambda \in \{1, 1/\sqrt{n}, 1/n\}$. For the $\lambda = 1$ case, results are shown with and without Batch Normalization. Note, entries are omitted where training was frozen or diverged at random initialization. See Appendix \ref{appendix:experiment_details} for experiment details.}
\label{table:cifar10_best_test_accuracies}
\begin{center}
  \begin{tabular}{|l|ccc|}
    \hline
    & \multicolumn{3}{c|}{\textbf{Number of layers ($n$)} } \\
    \cline{2-4}
    & \rule{0pt}{2ex} 10 & 100 & 1000 \\
    \hline
    $\rule{0pt}{2ex} \lambda = 1.0$ with BatchNorm  & $90.0 \pm 0.1$ & $89.02 \pm 0.1$ & $88.33 \pm 0.1$ \\
    $\rule{0pt}{2ex} \lambda = 1.0$ without BatchNorm & $85.19 \pm 0.3$ & -- & --\\
    $\rule{0pt}{2ex} \lambda = 1 / n$ without BatchNorm & $87.87 \pm 0.1$ & $87.53 \pm 0.2$ & $81.52 \pm 0.4$ \\
    $\rule{0pt}{2ex} \lambda = 1 / \sqrt{n}$ without BatchNorm & $88.0 \pm 0.1$ & $89.02 \pm 0.09$ & $86.82 \pm 0.3$ \\    
    \hline
  \end{tabular}
  \end{center}
\end{table}

\subsection{The impact of $\lambda$ on capacity and complexity}
\label{section:impact_on_complexity}
Perhaps unsurprisingly, $\lambda$ acts a capacity control forcing the model to coincide with the base model $M_0$ when $\lambda = 0$ and increasingly allowing more contribution of the higher-ensembles with higher capacity as $\lambda$ increases. This behavior as capacity control is clearly seen in our experiment of training CIFAR-10 on a simple variant of the ResNet architecture as described in Appendix \ref{appendix:experiment_details}. In Figure \ref{figure:cifar10_learning_curves}, the training loss shows that the base model (for $\lambda = 0$) is underfitting with a training loss value plateauing above 1. However, increasing $\lambda$ produces training losses that are increasingly able to fully fit the data, leading to interpolation (over-parametrized regime), consistent with the interpretation of $\lambda$ as a form of capacity control.

\begin{figure}[htbp] 
\centering 
\includegraphics[width=1.0\textwidth]
{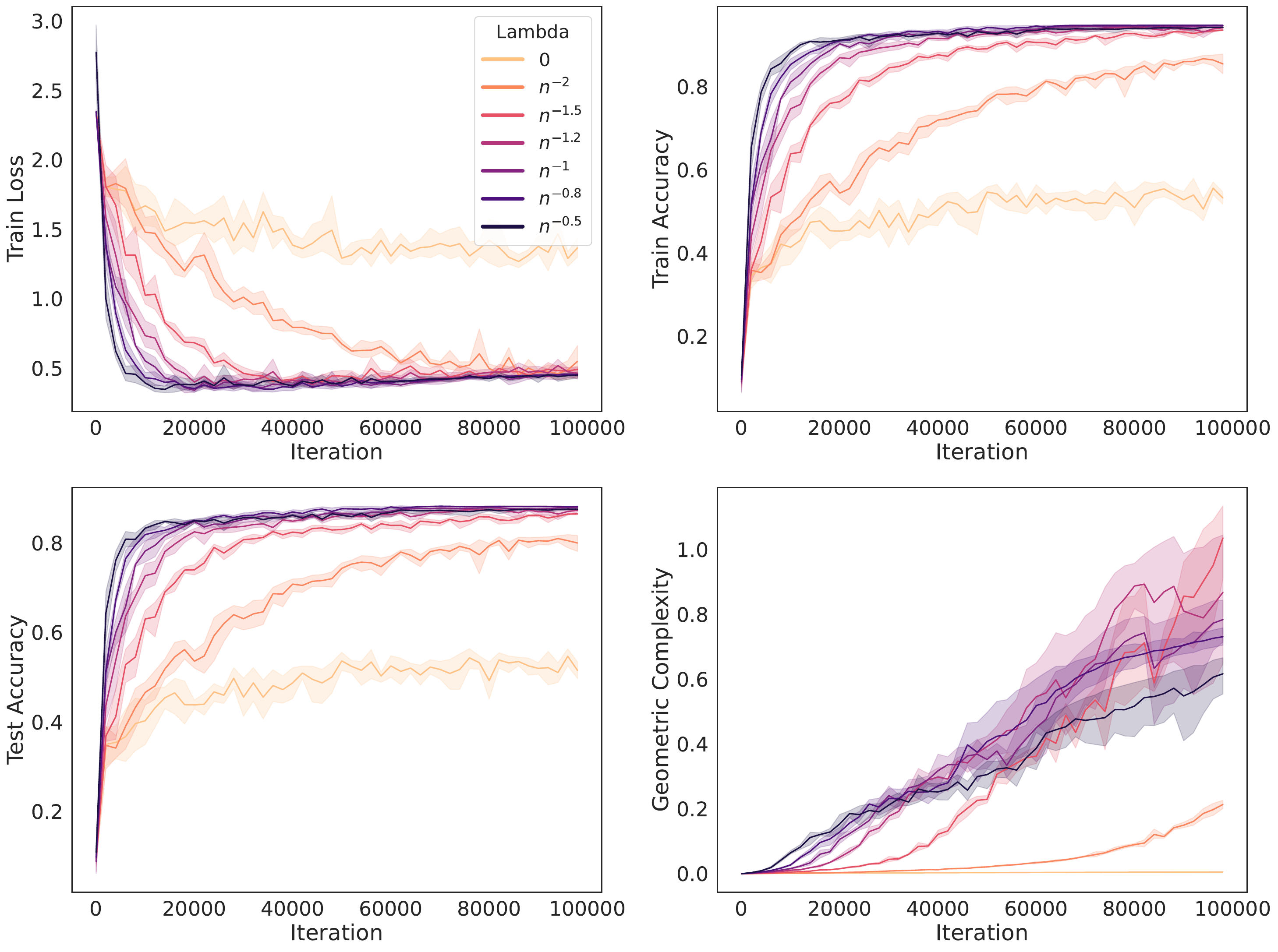}
\caption{CIFAR-10 trained on residual network with $n=16$ residual blocks. We plot the learning curves for the experiments in Figure \ref{figure:accuracy_and_gc_vs_lambda} for a sweep $\lambda\in\{0, n^{-2}, n^{-1.5}, n^{-1.2},  n, n^{-0.8}, n^{-0.5}\}$. Training with $\lambda > 1/\sqrt n$ (e.g., we also tried $\lambda \in \{ 1/n^{0.3}, 1/n^{0.4}, 1\}$) all failed.
\label{figure:cifar10_learning_curves} }
\end{figure}

This increase of interpolation power as $\lambda$ increases goes with a corresponding increase of test accuracy as in Figure \ref{figure:accuracy_and_gc_vs_lambda} (left). In fact, \emph{paradoxically}, the capacity increase achieved by higher-values of $\lambda$ produces models that are less complex (see Figure \ref{figure:accuracy_and_gc_vs_lambda}, right), as measured by the geometric complexity introduced in \cite{dherin2022why}. (See Appendix \ref{appendix:geometric_complexity} for a definition of the geometric complexity and an approximation corollary to Theorem \ref{theorem:expansion} for residual networks). In that sense, $\lambda$ acts simultaneously as a regularizer on the model complexity and as a control on the model capacity, with higher $\lambda$ leading to better test performance \emph{as well as to simpler models} (up to the point of divergence).

 A closer analysis of the learning curves for the geometric complexity in Figure \ref{figure:cifar10_learning_curves} shows that, in a first part of the training, higher values of $\lambda$ also come with an increase of the model complexity, as we would expect by increasing the contribution of the higher-order ensembles. However after a certain number of steps this expected behavior exactly reverses as the model enters a second stage, reminiscent of a similar phenomena observed in deep double descent as identified in \cite{belkin2019reconciling}. This reversal manifests in Figure \ref{figure:accuracy_and_gc_vs_lambda} (right) as well by a decrease in  geometric complexity after an initial increase as $\lambda$ goes up. We leave a precise investigation of the deeper nature of this relationship to future work. 

\begin{figure}[htbp] 
\centering 
\includegraphics[width=0.45\textwidth]{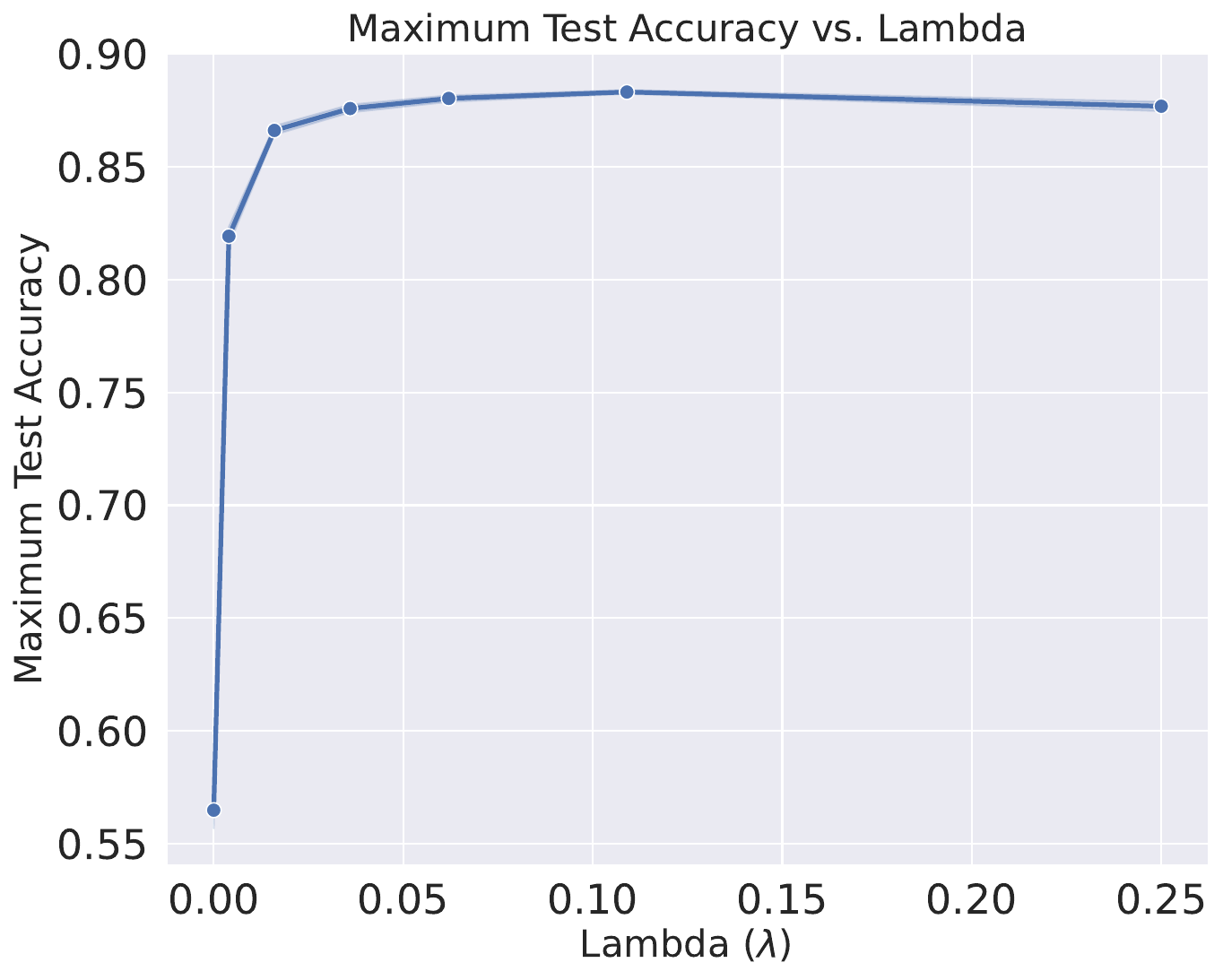}
\includegraphics[width=0.45\textwidth]{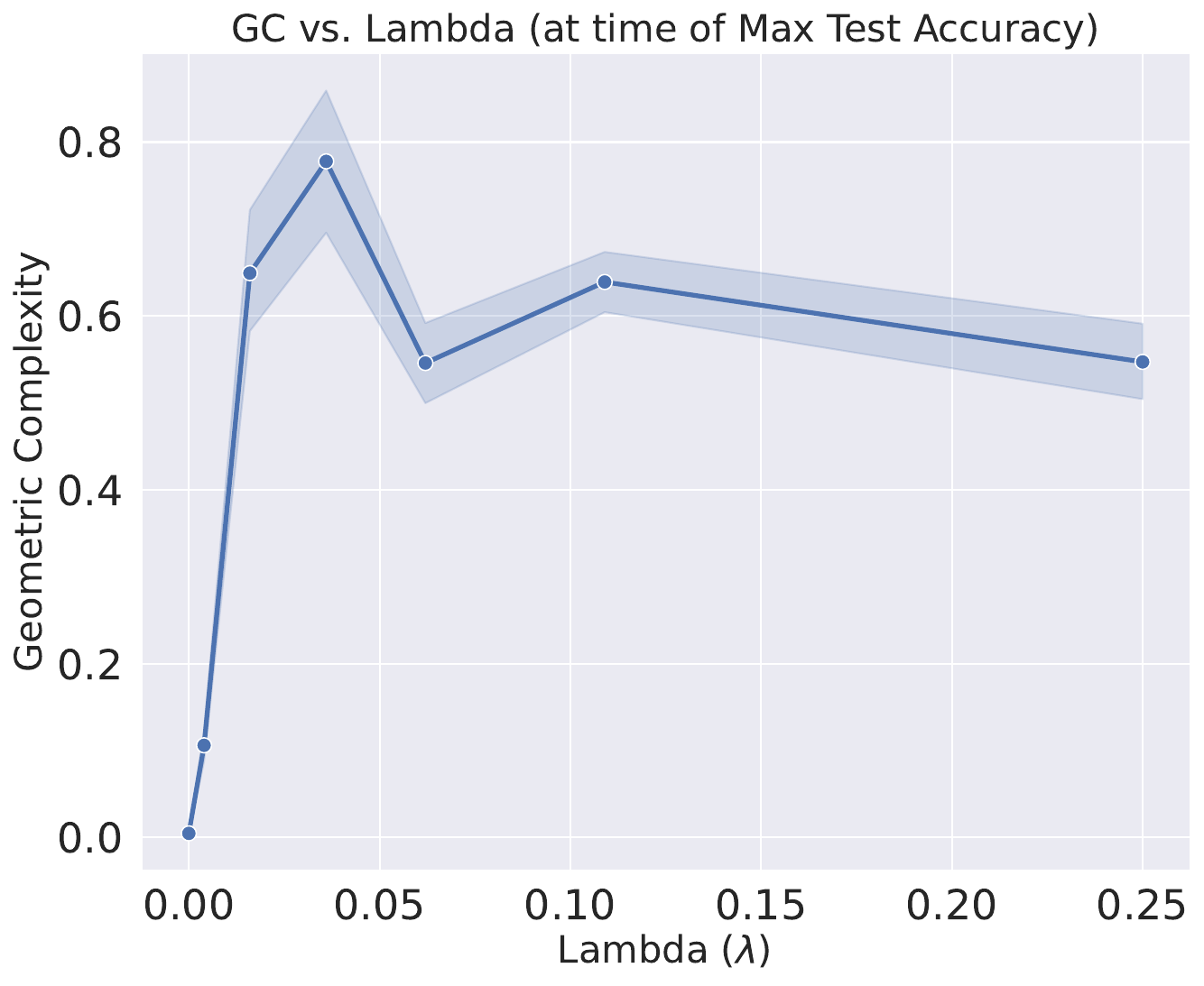}
\caption{Maximum test accuracy and geometric complexity at time of maximum test accuracy for various values of $\lambda$. \textbf{Left:} As $\lambda$ increases, maximum test accuracy increases. \textbf{Right:} However, increasing $\lambda$ leads to decreased model complexity after a first phase of increase. \label{figure:accuracy_and_gc_vs_lambda} }
\end{figure}

\section{Conclusion}

In this work, we have introduced the Residual Expansion Theorem, a formal mathematical framework that moves the understanding of deep residual networks from the conceptual analogy of an ensemble to a precise mathematical statement. Our theorem provides an explicit formula for the network's function, revealing that its depth is mathematically equivalent to the size of a hierarchical ensemble of models. The primary insight from this expansion is the identification of a combinatorial explosion of functional paths, which we posit as the fundamental cause of the training instability that has historically necessitated the use of normalization layers.

This theoretical lens provides a new, unifying explanation for the success of a family of existing normalization-free training methods. Techniques such as Fixup and SkipInit have empirically demonstrated the need for scaling down residual branches to achieve stability. However, their scaling factors were derived from analyzing optimizer dynamics or by analogy to Batch Normalization. Our work provides the first justification grounded in the network's functional structure, showing that these methods are effective precisely \emph{because} they serve as practical implementations of a necessary principle: taming the combinatorial growth of paths.

This work opens several promising avenues for future research. The contrast between our finite-depth $1/n$ scaling and the $1/\sqrt{n}$ scaling derived from infinite-depth mean-field theories (for deep linear networks \cite{bordelon2025deep}) suggests a rich spectrum of principled scaling laws that warrant further investigation. Applying this function-first scaling paradigm to other critical architectures, particularly Transformers, stands as a crucial next step. 
Ultimately, by providing a unifying theoretical foundation, this work paves the way for a more principled design of robust and extremely deep neural networks.

\ack{Acknowledgments}
We would like to Peter Bartlett and Adrian Goldwaser for helpful discussions, suggestions, and feedback during the development of this work.

\clearpage
\newpage

\clearpage
\newpage

\appendix 

\section{Experiment Details}
\label{appendix:experiment_details}

\subsection{ResNet Architecture Details}
\label{appendix:architecture}
For the values reported in Table \ref{table:cifar10_best_test_accuracies} and the plots in Figures \ref{figure:cifar10_learning_curves}  and \ref{figure:accuracy_and_gc_vs_lambda}, we train a deep residual network consisting of $n$ convolutional blocks. Similar to a traditional ResNet model \cite{he2016deep}, each layer consists of two convolutions with $(3, 3)$ kernels and ReLU activations. Our architecture differs from the standard ResNet in that we keep the number of channels fixed at 256 and we only use strides of size $(1, 1)$. These two changes ensure that the shape remains fixed as information flows through the network between layers.
Also, for our model to be consistent with the theory, we remove the final ReLU in each residual block. Therefore, a single version of our residual block takes the form 
$$
\text{ResNetBlock}(x) = x + \lambda F(x), \qquad \text{where } F(x) = \text{Conv}_{3 \times 3}(\text{ReLU}(\text{Conv}_{3 \times 3}(x)))
$$
Our residual tower consists of stack of $n$ of these Residual Blocks. For those models trained with Batch Normalization, we modify the transformation $F$ so that 
$$
F(x) = \text{BatchNorm}(\text{Conv}_{3 \times 3}(\text{ReLU}(\text{BatchNorm}(\text{Conv}_{3 \times 3}(x)))).
$$
This describes the residual tower $R_\theta$ as discussed in Section \ref{section:residual_expansion_theorem}. For the encoding $E_\xi$ and decoding $D_\eta$ networks, we use the same architecture as that of a traditional ResNet model as implemented in \cite{flax2020github} (see for instance \url{https://github.com/google/flax/blob/main/examples/imagenet/models.py}). 

\subsection{Experiment details for Table \ref{table:cifar10_best_test_accuracies}} 
All results in Table \ref{table:cifar10_best_test_accuracies} are obtained by training the CIFAR-10 dataset \cite{Krizhevsky09} on the residual network described above. The only exception is for models of depth $n=1000$, where the architecture's width (or channel size) was set to 128 instead of 256 because of memory limitation; all other aspects remained identical.
Each model was trained with a batch size of 64 for 160,000 steps (i.e., approximately 200 epochs) using SGD with momentum 0.9 and on a sweep of learning rates ranging from $\{2^{-10}, 2^{-9},\dots,2^{-1} \}$. We apply data augmentation in the form of data scaling, random cropping and random flips. We do not employ any learning rate schedule or weight decay during training.

For each learning rate in the sweep, we train the model with 5 random seeds which are used for model initialization and data shuffling. We then measure and report the mean and standard error for the best average test accuracy achieved for a given learning rate.

\subsection{Experiment details for Figure \ref{figure:cifar10_learning_curves} and Figure \ref{figure:accuracy_and_gc_vs_lambda}}

All results in Figure \ref{figure:cifar10_learning_curves} (and Figure \ref{figure:accuracy_and_gc_vs_lambda}) are obtained by training the CIFAR-10 dataset \cite{Krizhevsky09} on the residual network described in Section \ref{appendix:architecture}.
The number of residual blocks was set to $n=16$.  
Each model was trained with a batch size of 64 for 100,000 steps (i.e., approximately 80 epochs) using SGD with momentum 0.9 and learning rate 0.3. We apply data augmentation in the form of data scaling, random cropping and random flips. We do not employ any learning rate schedule or weight decay during training. 

For each $\lambda\in\{0, n^{-2}, n^{-1.5}, n^{-1.2},  n, n^{-0.8}, n^{-0.5}, n^{-0.4}, n^{-0.3}, 1\}$ in the sweep (with $n=16$), we train the model with 5 random seeds which are used for model initialization and data shuffling. All the values of $\lambda$ above $1/\sqrt n$ diverged in that setting. 

\clearpage
\newpage
\section{Geometric Complexity}
\label{appendix:geometric_complexity}
The \emph{geometric complexity}  \citep{dherin2022why} is a measure of how much a model's output changes in response to small changes in its input. Models with high geometric complexity can represent intricate, jagged functions, which are prone to overfitting, whereas models with lower complexity are constrained to learn smoother, simpler functions that often generalize better.

More precisely, the geometric complexity of a model $f$ over a dataset $D$ is defined as the average squared Frobenius norm of its input-output Jacobian:
$$\langle f,D\rangle_{G} := \frac{1}{|D|}\sum_{x \in D} ||\nabla_{x}f(x)||_{F}^{2}$$
The Residual Expansion Theorem allows us to approximate how the geometric complexity changes with $\lambda$. In fact, the following corollary shows that for a residual models as in Equation \ref{equation:residual_network}, the model geometric complexity corresponds to the geometric complexity of the base model $M_0$ plus a term depending on $\lambda$. The fact that this latter term can be negative (as it is an inner product in matrix space) shows the theoretical possibility that models with higher $\lambda$ can be geometrically less complex than model with smaller $\lambda$ (see Figure \ref{figure:accuracy_and_gc_vs_lambda}, right).  

\begin{corollary} \label{corollary:geometric_complexity}
The geometric complexity of a residual network as defined in Equation \ref{equation:residual_network} is given at first orders by:
$$\langle f, D \rangle_{G} = \langle M_0, D\rangle_{G} + 2\lambda  \frac{1}{|D|}\sum_{x \in D} \text{Tr}\left( \big(W_{\eta}E_\xi'(x)\big)^T \sum_{i=1}^{n} W_{\eta}F'_{i}(E_{\xi}(x))E_\xi'(x) \right)  + \mathcal{O}(\lambda^2)$$
\end{corollary}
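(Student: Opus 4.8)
The plan is to substitute the first-order residual network expansion of Theorem~\ref{theorem:expansion} into the definition of geometric complexity and then expand the squared Frobenius norm of the Jacobian to first order in $\lambda$. Concretely, Theorem~\ref{theorem:expansion} gives
$$f(x) = D_\eta\big(E_\xi(x)\big) + \lambda \sum_{i=1}^{n} W_\eta F_i\big(E_\xi(x)\big) + \mathcal{O}(\lambda^2),$$
so writing $M_0(x) = D_\eta\big(E_\xi(x)\big)$ we have $f = M_0 + \lambda\, g + \mathcal{O}(\lambda^2)$ with $g(x) = \sum_{i=1}^n W_\eta F_i\big(E_\xi(x)\big)$. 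The first step is to differentiate this in $x$: assuming the residual branches are smooth enough that differentiation in $x$ commutes with the $\lambda$-expansion (so that the remainder and its $x$-derivative are both $\mathcal{O}(\lambda^2)$ uniformly on $D$), the chain rule and the fact that $D_\eta$ is affine with linear part $W_\eta$ give
$$\nabla_x f(x) = W_\eta E_\xi'(x) + \lambda \sum_{i=1}^n W_\eta F_i'\big(E_\xi(x)\big) E_\xi'(x) + \mathcal{O}(\lambda^2),$$
and in particular $\nabla_x M_0(x) = W_\eta E_\xi'(x)$.

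The second step is to expand the squared norm. Setting $A(x) = W_\eta E_\xi'(x)$ and $B(x) = \sum_{i=1}^n W_\eta F_i'\big(E_\xi(x)\big) E_\xi'(x)$, the polarization identity $\|A + \lambda B\|_F^2 = \|A\|_F^2 + 2\lambda\,\text{Tr}(A^T B) + \lambda^2\|B\|_F^2$ yields $\|\nabla_x f(x)\|_F^2 = \|A(x)\|_F^2 + 2\lambda\,\text{Tr}\big(A(x)^T B(x)\big) + \mathcal{O}(\lambda^2)$. Averaging over $x \in D$, the leading term $\tfrac{1}{|D|}\sum_{x\in D}\|W_\eta E_\xi'(x)\|_F^2$ is by definition $\langle M_0, D\rangle_G$, and the first-order term is precisely $2\lambda \tfrac{1}{|D|}\sum_{x\in D}\text{Tr}\big((W_\eta E_\xi'(x))^T \sum_{i=1}^n W_\eta F_i'(E_\xi(x)) E_\xi'(x)\big)$, which is the claimed expression.

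The computation is essentially routine; the only genuine point requiring care is justifying that the $\mathcal{O}(\lambda^2)$ remainder in the functional expansion stays $\mathcal{O}(\lambda^2)$ after differentiating in $x$ and after squaring. This rests on the residual branches $F_i$ being $C^2$ with derivatives bounded on a neighborhood of the image $E_\xi(D)$, an assumption already implicit in the Taylor expansions used to prove Theorem~\ref{theorem:expansion}. A minor bookkeeping remark is that both cross terms $\text{Tr}\big(A^T(\lambda B)\big)$ and $\text{Tr}\big((\lambda B)^T A\big)$ arise from squaring, and since these are real and equal by $\text{Tr}(A^T B) = \text{Tr}(B^T A)$, they combine into the single factor $2\lambda$ in the statement.
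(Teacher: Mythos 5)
Your proposal is correct and follows essentially the same route as the paper's own proof: differentiate the first-order expansion to get $\nabla_x f = A + \lambda B + \mathcal{O}(\lambda^2)$ with $A = W_\eta E_\xi'(x)$ and $B = \sum_i W_\eta F_i'(E_\xi(x))E_\xi'(x)$, expand the squared Frobenius norm via the trace and the identity $\mathrm{Tr}(A^TB) = \mathrm{Tr}(B^TA)$, then average over $D$. Your added remark on why the remainder stays $\mathcal{O}(\lambda^2)$ after differentiating in $x$ is a welcome point of rigor that the paper leaves implicit.
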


\begin{proof}
To evaluate the geometric complexity of $f$ over the dataset $D$, we need to evaluate the derivative of $f$. We can write it as
\begin{equation}
\nabla_x f(x) = A(x) + \lambda B(x) + \mathcal O(\lambda^3),
\end{equation}
where 
\begin{align}
A(x) & = \nabla_x \Big(D_\eta \circ E_\xi\Big)(x) = W_\eta E'_\xi(x)\\
B(x) & = \nabla_x \left( \sum_{i=1}^n W_\eta F_i\Big(E_\xi(x)\Big)\right)
 = \sum_{i=1}^n W_\eta F_i'\Big(E_\xi(x)\Big)E_\xi'(x). 
\end{align}
Now the Frobenius norm of the model derivative can be written as
\begin{eqnarray}
\|\nabla_x f(x)\|_F^2 & = & \text{Tr}\Big(f'(x)^Tf(x)\Big) \\
& = & \text{Tr}\Big(
\big(A^T + \lambda B^T + \mathcal O(\lambda^2)\big)
\big(A + \lambda B + \mathcal O(\lambda^2)\big)
\Big)\\
& =  & \text{Tr}(A^TA) + 2\lambda \text{Tr}(A^TB) + \mathcal O(\lambda^2)\\
& = & \|\nabla_x\Big(D_\eta\circ E_\xi(x)\Big)\|_F^2 \\
&  & \quad + 2\lambda \text{Tr}\left( \big(W_{\eta}E_\xi'(x)\big)^T \sum_{i=1}^{n} W_{\eta}F'_{i}(E_{\xi}(x))E_\xi'(x) \right) + \mathcal O(\lambda^2),
\end{eqnarray}
where we used that $\text{Tr}(B^T A) = \text{Tr}(A^TB)$. We now obtain the desired result by averaging $\|\nabla_x f(x)\|_F^2$ over the data $D$.
\end{proof}

\section{The geometric counterpart to the residual expansion}
\label{appendix:geometric_counterpart}

In this section, we provide the geometric intuition that underpins the functional analysis presented in the main paper. We show how the structure of the parameter space and the corresponding loss landscape provides a complementary perspective on why deep residual networks are effective and how our proposed scaling ensures their stability.
The analysis in this section stems from the same fundamental property that enables the Residual Expansion Theorem: a residual block, $(1+\lambda F_{\theta_i})(z)$, becomes an identity map when its parameters are set to zero (i.e., $F_{\theta_i=0}(z)=0$). While the main paper uses this property to derive a functional expansion, here we explore its profound implications for the geometry of the optimization problem as network depth increases.
First, let us establish our setting. For a fixed dataset $\mathcal{D}$, we consider a loss function $\mathcal{L}(y,f_\theta(x))$ that depends only on the network's prediction. The total loss is the average over the dataset:
$$
\mathcal{L}(\theta) = \frac{1}{|\mathcal{D}|} \sum_{(x,y) \in \mathcal{D}} \mathcal{L}(y,f_\theta(x))
$$

Crucially, this means that if two networks with different parameters, $\theta$ and $\eta$, compute the same function ($f_\theta(x)=g_\eta(x)$ for all $x$), their losses are identical ($\mathcal{L}(\theta)=\mathcal{L}(\eta)$). We also assume the loss is zero for any parameter set that perfectly interpolates the data.
A residual network of depth $n$, denoted $f_{\omega_n}^n(x)$, has parameters $\omega_n=(\eta,\xi,\theta_1,...,\theta_n)$ in a parameter space $\Omega_n$. Because any residual block becomes an identity map when its parameters
$\theta_i$ are zero, a network of depth $n$ contains all shallower networks of depth $k<n$ within its parameter space. For instance, by setting the parameters of the final block to zero (
$\theta_n=0$), the $n$-layer network becomes functionally identical to an $(n-1)$-layer network:
$$
f_{(\eta,\xi,\theta_1,...,\theta_{n-1},0)}^n(x) = f_{(\eta,\xi,\theta_1,...,\theta_{n-1})}^{n-1}(x)
$$
This structural embedding has a direct consequence on the loss landscape. The loss of the deeper network on this embedded subspace is identical to the loss of the shallower network:
$$
\mathcal{L}_n(\eta,\xi,\theta_1,...,\theta_{n-1},0) = \mathcal{L}_{n-1}(\eta,\xi,\theta_1,...,\theta_{n-1})
$$
This provides a powerful geometric explanation for why increasing depth is an effective optimization strategy. Adding a new layer does not force the optimizer to find a solution in an entirely new landscape; instead, it expands the search space while preserving all solutions found by shallower networks. This is particularly true for the set of global minima (the zeros of the loss function), denoted $\mathcal{Z}_n$. If a parameter set $\omega_{n-1}$ is a global minimum for the $(n-1)$-layer network (i.e., $\omega_{n-1} \in \mathcal{Z}_{n-1}$), then the point $\omega_n=(\omega_{n-1},0)$ must also be a global minimum for the $n$-layer network. This guarantees a natural embedding of the sets of optimal solutions:
$$
\mathcal{Z}_{n-1} \hookrightarrow \mathcal{Z}_n
$$
This hierarchical structure of the loss landscape is the geometric counterpart to the Residual Expansion Theorem. The zero-order term of our expansion, $P_\eta \circ E_\xi(x)$, corresponds to the base shallow network whose optimal solutions are preserved and embedded within all deeper networks.
This connection also clarifies the crucial role of the $1/n$ scaling proposed in the main paper. The embedded geometry guarantees that as we add layers, ``good'' solutions from shallower networks continue to exist. However, the Residual Expansion Theorem shows that without proper scaling, the landscape around these solutions becomes increasingly unstable due to the combinatorial explosion of higher-order terms. The
$1/n$ scaling tames this explosion, effectively smoothing the loss landscape and ensuring that the theoretically guaranteed optimal regions remain practically accessible to the optimizer, even at extreme depths.

\end{document}